    \definecolor{plum}  {rgb}{.4,0,.4}
    \definecolor{BrickRed} {rgb}{0.6,0,0}
	\definecolor{DarkBlue} {rgb}{0,0,0.6}
\def\ddefloop#1{\ifx\ddefloop#1\else\ddef{#1}\expandafter\ddefloop\fi}
\def\ddef#1{\expandafter\def\csname b#1\endcsname{\ensuremath{\boldsymbol{#1}}}}
\def\ddef#1{\expandafter\def\csname c#1\endcsname{\ensuremath{\mathcal{#1}}}}
\def\ddef#1{\expandafter\def\csname s#1\endcsname{\ensuremath{\mathsf{#1}}}}
\def\Reals{{\mathbb R}}
\def\Complex{{\mathbb C}}
\def\p{{\partial}} 
\def\Ex{{\mathbf E}} 
\def\Tr{{\mathsf T}} 
\DeclareMathOperator*{\argmin}{arg\,min}
\newenvironment{proof}{\paragraph{Proof:}}{\hfill$\square$}
\newtheorem{theorem}{Theorem}
\newtheorem{lemma}[theorem]{Lemma}
\title{\makebox[0pt]{Rademacher Complexity of Neural ODEs via Chen--Fliess Series}}
\author{Joshua Hanson \\ \href{mailto:jmh4@illinois.edu}{jmh4@illinois.edu} \\ 
\and Maxim Raginsky \\ \href{mailto:maxim@illinois.edu}{maxim@illinois.edu} }
\date{}
\begin{document}
\maketitle

\begin{abstract}

We show how continuous-depth neural ODE models can be framed as single-layer, infinite-width nets using the Chen--Fliess series expansion for nonlinear ODEs. In this net, the output ``weights'' are taken from the signature of the control input --- a tool used to represent infinite-dimensional paths as a sequence of tensors --- which comprises iterated integrals of the control input over a simplex. The ``features'' are taken to be iterated Lie derivatives of the output function with respect to the vector fields in the controlled ODE model. The main result of this work applies this framework to derive compact expressions for the Rademacher complexity of ODE models that map an initial condition to a scalar output at some terminal time. The result leverages the straightforward analysis afforded by single-layer architectures. We conclude with some examples instantiating the bound for some specific systems and discuss potential follow-up work.

\end{abstract}

\section{Introduction}

Several recent works have examined continuous-depth idealizations of deep neural nets, viewing them as continuous-time ordinary differential equation (ODE) models with either fixed or time-varying parameters. Traditional discrete-layer nets can be recovered by applying an appropriate temporal discretization scheme, e.g., the Euler or Runge-Kutta methods. In applications, this perspective has resulted in advantages concerning regularization \citep{kelly_2020, pal_2021, kobyzev_2021}, efficient parameterization \citep{queiruga_2020}, convergence speed \citep{chen_2023}, applicability to non-uniform data \citep{sahin_2019}, among others. As a theoretical tool, continuous-depth idealizations have lead to better understanding of the contribution of depth to model expressiveness and generalizability \citep{massaroli_2020, marion_2023}, new or improved training strategies via framing as an optimal control problem \citep{corbett_2022}, and novel model variations \citep{jia_2019, peluchetti_2020}.

Considered as generic control systems, continuous-depth nets can admit a number of distinct input-output configurations depending on how the control system ``anatomy'' is delegated. Controlled neural ODEs \citep{kidger_2020} and continuous-time recurrent neural nets \citep{fermanian_marion_2021} treat the (time-varying) control signal as the input to the model; the initial condition is either fixed or treated as a trainable parameter; the (time-varying) output signal is the model output; and any free parameters of the vector fields (weights) are held constant in time. One may instead consider the initial condition to be the input; the output signal at a fixed terminal time as the model output; and the (fixed or time-varying) control signal as a representative for (depth-varying) model parameters, which yields a typical neural ODE \citep{chen_2018}. Here the input is a static finite-dimensional vector rather than a sequence or function of time, and this is the setting we consider in this work. Recent work of \citet{marion_2023}, discussed below, also considers this setting.

New results and insight have emerged from studying neural nets in the infinite-width or ``mean-field'' limit \citep{lu_2020, jacot_2021}. We can apply similar methods to neural ODEs by representing them using an infinite series expansion. One such example is the Chen--Fliess series \citep{chen_1957,fliess_1981}, which represents the output of a system as a sum of iterated Lie derivatives of the output function multiplied by corresponding iterated integrals of the control input, eliminating any recursive dependence on the state. This sequence of iterated integrals of the control is called the signature, which has been used in rough path theory for approximating and reconstructing stochastic signals using finite-dimensional data \citep{fermanian_2023}, and has also appeared in the control literature as a tool for studying small-time asymptotic behavior of control trajectories \citep{sussman_1983}. Expressed as an infinite series using this formalism, the small-time initial-condition-to-output map is linear in the iterated integrals of the control input (i.e., elements of the input signature), and the remaining terms depend only on the initial condition.

The Chen--Fliess series can be interpreted as an infinitely wide, single-layer neural net where each node uses a different activation function computing the appropriate iterated Lie derivative in the series. In this net, the input weights are set to unity and the output weights are the corresponding signature elements. Applying series expansions for nonlinear ODEs in this way allows us to analyze continuous-depth nets using well-established techniques for single-layer, infinite-width nets, which are often comparatively simpler.
Our goal in this work is to demonstrate a compact generalization bound for neural ODEs using these techniques. \citet{marion_2023} also gives bounds on the generalization error of neural ODEs using covering number estimates for parameterized ODE model classes. By contrast, the generalization bounds in this work made use of Rademacher complexities and can complement those of \citet{marion_2023}.

The remainder of the paper is organized as follows. In Section \ref{sec:chen_fliess}, we describe the Chen--Fliess series and how it is used to generate a tractable model architecture. Section \ref{sec:main_result} defines the learning problem, then states the main Rademacher complexity bound and proof. In Section \ref{sec:examples} we give some concrete examples to instantiate the bound, with conclusions and discussion provided in Section \ref{sec:conclusion}.

\section{Chen--Fliess series}\label{sec:chen_fliess}

We are interested in maps $\varphi : \sX \subset \Reals^n \to \sY \subset \Reals$ that can be described by sending an initial condition $x_0 \in \sX$ to the resulting output $y(T) \in \sY$ at time $T$ of a fixed control system
\begin{equation}\label{eq:control_system_nonlinear}
\begin{split}
    \dot{x}(t) &= f(x(t),u(t)),\quad\quad x(0) = x_0 \\
    y(t) &= h(x(t),u(t))
\end{split}
\end{equation}
with a fixed control input $u : [0,T] \to \sU$, where $\sU$ is an arbitrary subset of some finite-dimensional vector space.

\subsection{Control-affine systems}

Consider the generic nonlinear control system \eqref{eq:control_system_nonlinear}. For the purposes of this work, we can restrict our attention to control-affine systems with linear outputs, which we will justify below. Assume that $f(\cdot,u) : \Reals^n \to \Reals^n$ is continuous for every $u \in \sU$. Then we can always find continuous vector fields $g_1,\dots,g_m : \Reals^n \to \Reals^n$ such that $\{f(x,u) : u \in \sU\} \subseteq \text{span}\{g_1(x),\dots,g_m(x)\}$ for every $x \in \sX$ \citep{sussmann_2008}. Now for each $x \in \Reals^n$ define
\begin{equation*}
    G := \begin{bmatrix} g_1(x) & \hdots & g_m(x) \end{bmatrix} \in \Reals^{n \times m},\quad\quad P := \begin{bmatrix} I & 0 \end{bmatrix} \in \Reals^{r \times m},
\end{equation*}
where $r \leq m$ is the rank of $G$. Assume without loss of generality that $g_{r+1}(x) = \cdots = g_m(x) = 0$, so then $GP^\Tr \in \Reals^{n \times r}$ has rank $r$. By construction, $f(x,u)$ is in the column space of $G$, thus for each $u \in \sU$ there exists $v \in \Reals^m$ such that $Gv = f(x,u)$. In this case, we can take
\begin{equation*}
    v = P^\Tr (P G^\Tr G P^\Tr)^{-1} P G^\Tr f(x,u).
\end{equation*}
Then the trajectory of the control-affine system
\begin{equation}\label{eq:control_system_affine1}
    \dot{x}(t) = \sum_{i=1}^m v_i(t) g_i(x(t)),\quad\quad x(0) = x_0
\end{equation}
is that same as the trajectory of \eqref{eq:control_system_nonlinear}. Considered as an open-loop control system, the set of admissible solutions of \eqref{eq:control_system_affine1} subsumes the set of admissible solutions of \eqref{eq:control_system_nonlinear}. The map $(x,u) \mapsto v$ may be discontinuous, but this is without consequence. From another angle that avoids state feedback, we could have instead considered appending the dynamics with an input integrator to yield
\begin{equation}\label{eq:control_system_affine2}
\begin{split}
    \dot{x}(t) &= f(x(t),u(t)),\quad\quad x(0) = x_0 \\
    \dot{u}(t) &= v(t),
\end{split}
\end{equation}
which is another control-affine system with the same trajectory as \eqref{eq:control_system_nonlinear}, provided that $u$ is at least weakly differentiable. Furthermore, differentiating the output yields the equation
\begin{equation*}
    \dot{y}(t) = \frac{\p h}{\p x}(x(t),u(t)) f(x(t),u(t)) + \frac{\p h}{\p u}(x(t),u(t)) v(t),
\end{equation*}
which can be appended to the state dynamics (preceding the construction of \eqref{eq:control_system_affine1} or \eqref{eq:control_system_affine2} above) so that the output map becomes linear in the (augmented) state. Thus for the purposes of characterizing complexity, we restrict our attention to control-affine systems with linear output maps of the form
\begin{equation}\label{eq:control_system}
\begin{split}
    \dot{x}(t) &= f(x(t)) + \sum_{i=1}^m u_i(t) g_i(x(t)),\quad\quad x(0) = x_0 \\
    y(t) &= c^\Tr x(t),
\end{split}
\end{equation}
where $f,g_1,\dots,g_m : \Reals^n \to \Reals^n$ and $c \in \Reals^n$. Lastly, we can disguise the drift if necessary by setting $g_0 \equiv f / M$ and $u_0 \equiv M$ for some constant $M \neq 0$, so without loss of generality we will only consider the driftless case (i.e., $f \equiv 0$) which will be convenient for certain calculations later.

\subsection{Chen--Fliess series}

To keep the paper self-contained, we give here a formal derivation of the Chen--Fliess series; see \citet{sussman_1983,isidori_1995,le_marbach_2023} for rigorous expositions, including the analysis of convergence and truncation errors. By the fundamental theorem of calculus, the output at time $t \geq 0$ can be written
\begin{equation}
\begin{split}\label{eq:output_expansion_1}
    y(t) &= c^\Tr x_0 + \int_0^t c^\Tr \dot{x}(s) \dif s \\
    &= c^\Tr x_0 + \int_0^t c^\Tr \left( \sum_{i=1}^m u_i(s) g_i(x(s)) \right) \dif s \\
    &= c^\Tr x_0 + \sum_{i=1}^m \int_0^t u_i(s) L_{g_i} c^\Tr x(s) \dif s.
\end{split}
\end{equation}
where $L_{g_i} c^\Tr x(s) = c^\Tr g_i(x(s))$ is the Lie derivative of the output function with respect to the vector field $g_i : \Reals^n \to \Reals^n$ at time $s \geq 0$. Using a similar trick, we can rewrite this Lie derivative as
\begin{align*}
    L_{g_i} c^\Tr x(s) &= L_{g_i} c^\Tr x_0 + \int_0^s c^\Tr \frac{\p g_i}{\p x}(x(r)) \dot{x}(r) \dif r \\
    &= L_{g_i} c^\Tr x_0 + \int_0^s c^\Tr \frac{\p g_i}{\p x}(x(r)) \left( \sum_{j=1}^m u_j(r) g_j(x(r)) \right) \dif r \\
    &= L_{g_i} c^\Tr x_0 + \sum_{j=1}^m \int_0^s u_j(r) L_{g_j} \circ L_{g_i} c^\Tr x(r) \dif r,
\end{align*}
where $\frac{\p g_i}{\p x}(x(r)) \in \Reals^{n \times n}$ is the Jacobian of $g_i$ evaluated at $x(r)$. Substituting this into \eqref{eq:output_expansion_1} gives
\begin{align}
    y(t) &= c^\Tr x_0 + \sum_{i=1}^m \int_0^t u_i(s) \left( L_{g_i} c^\Tr x_0 + \sum_{j=1}^m \int_0^s u_j(r) L_{g_j} \circ L_{g_i} c^\Tr x(r) \dif r \right) \dif s \label{eq:output_expansion_2} \\
    &= c^\Tr x_0 + \sum_{i=1}^m \left( \int_0^t u_i(s) \dif s \right) L_{g_i} c^\Tr x_0 + \sum_{1 \leq i,j \leq m} \int_0^t \int_0^s u_i(s) u_j(r) L_{g_j} \circ L_{g_i} c^\Tr x(r) \dif r \dif s \nonumber.
\end{align}
Repeating this process for $L_{g_j} \circ L_{g_i} c^\Tr x(r)$ in \eqref{eq:output_expansion_2} and for the resulting higher order Lie derivatives generates the so-called \textit{Chen--Fliess series}
\begin{equation*}
    y(t) = \sum_{\substack{1 \leq i_1,\dots,i_k \leq m\\k \geq 0}} \left( \int_0^t \int_0^{\tau_k} \cdots \int_0^{\tau_2} u_{i_k}(\tau_k) \cdots u_{i_1}(\tau_1) \dif \tau_1 \cdots \dif \tau_{k-1} \dif \tau_k \right) \left( L_{g_{i_1}} \circ \cdots \circ L_{g_{i_k}} c^\Tr x_0 \right).
\end{equation*}
It will be convenient later to have a more compact expression for this series. Denote the set of multi-indices by $\sW := \{ w = (i_1,\dots,i_k) : 1 \leq i_1,\dots,i_k \leq m,\ k \geq 0 \}$. For a multi-index $w = (i_1,\dots,i_k)$, let $L_w := L_{g_{i_1}} \circ \cdots \circ L_{g_{i_k}}$ and $u_w(\tau) := u_{i_1}(\tau_1) \cdots u_{i_k}(\tau_k)$. The region of integration is a $k$-simplex, which we denote by $\Delta^k(t) := \{(\tau_1,\dots,\tau_k) : 0 \leq \tau_1 \leq \cdots \leq \tau_k \leq t \}$. Now we can write
\begin{equation}\label{eq:chen_fliess_series}
    y(t) = \sum_{w \in \sW} \left( \int_{\Delta^{|w|}(t)} u_w(\tau) \dif \tau \right) L_w c^\Tr x_0
\end{equation}
where $|w|$ is the length of the multi-index $w$. The term corresponding to the empty multi-index is simply the constant $c^\Tr x_0$.

\subsection{Sequence-space embeddings}\label{sec:seq_embeddings}

Consider a space of bounded, measurable inputs $\cU \subset \{ u : [0,T] \to \Reals^m : u_i(t) \in [-M,M] \}$. We can embed this space of functions $\cU$ into the space of real-valued sequences $\Reals^\sW$ via the so-called \textit{signature} $S : \cU \to \Reals^\sW$. For each $w \in \sW$, define
\begin{equation*}
    S^w(u) := \int_{\Delta^{|w|}(T)} u_w(\tau) \dif \tau.
\end{equation*}
Observe that $|S^w(u)| \leq \frac{(MT)^{|w|}}{|w|!}$, since the integrand is bounded by $|u_w(\tau)| \leq M^{|w|}$ and the volume of the $|w|$-simplex $\Delta^{|w|}(T)$ is $\frac{T^{|w|}}{|w|!}$.

Now consider a compact set of initial conditions $\sX \subset \Reals^n$. In the same manner that $\cU$ is embedded into a sequence space, we can also embed $\sX$ into a sequence space via a map $\Phi : \sX \to \Reals^\sW$ that computes iterated Lie derivatives of the output map.
For each $w \in \sW$, define
\begin{equation*}
    \Phi^w(x) := L_w c^\Tr x.
\end{equation*}
The embeddings $S$ and $\Phi$ pair naturally to recover the Chen--Fliess series \eqref{eq:chen_fliess_series} concisely as
\begin{equation}\label{eq:chen_fliess_embeddings}
    y(T) = \left\langle S(u), \Phi(x_0) \right\rangle = \sum_{w \in \sW} S^w(u) \Phi^w(x_0).
\end{equation}
This representation of the output $y(T)$ admits a natural interpretation as a linear combination of nonlinear ``features'' $\Phi^w(x_0)$, where the ``weights'' are precisely the signature elements $S^w(u)$, which is a well-understood model architecture in learning theory. It is important to recognize that this (formal) series may fail to converge unless the time horizon $T$ is sufficiently short, the control magnitude $M$ is sufficiently small, and/or certain regularity assumptions on the vector fields $g_1,\dots,g_m$ are satisfied. Such conditions are needed so that the map $x_0 \mapsto y(T)$ is well-defined. In a later section we will consider sufficient assumptions to guarantee convergence.

\section{Main result}\label{sec:main_result}

Suppose we have a sample of i.i.d.\ random vectors $(X_1,Y_1),\dots,(X_N,Y_N)$ drawn according to a probability measure $\mu$ with compact support $\mathrm{supp}(\mu) = \sX \times \sY \subset \Reals^n \times \Reals$. We seek to identify a function $\varphi : \sX \to \sY$ that approximately reproduces this sample and generalizes to other identically distributed samples. Consider a class of such functions $\cF$ given by
\begin{equation*}
    \cF = \left\{ x \mapsto \varphi(x) = \left\langle S(u), \Phi(x) \right\rangle : u \in \cU \right\}.
\end{equation*}
The vector fields $g_1,\dots,g_m$ implicit in the definition of $\Phi$ are considered to be fixed, and the learnable parameters are represented by the control input $u$ (or equivalently, the elements of the signature $S(u)$). Given a loss function $\ell : \sY \times \sY \to \Reals$, define the \textit{expected risk}
\begin{equation*}
    L_\mu(\varphi) := \Ex_\mu \left[ \ell(Y,\varphi(X)) \right] = \int_{\sX \times \sY} \ell(y,\varphi(x)) \mu(\dif x,\dif y),
\end{equation*}
the \textit{minimum risk} $L_\mu^*(\cF) := \inf_{\varphi \in \cF} L_\mu(\varphi)$, and the \textit{empirical risk} $\frac{1}{N} \sum_{i=1}^N \ell(Y_i,\varphi(X_i))$. The empirical risk minimization (ERM) algorithm can be stated succinctly as
\begin{equation*}
    \hat{\varphi} \in \argmin_{\varphi \in \cF} \frac{1}{N} \sum_{i=1}^N \ell(Y_i, \varphi(X_i)),
\end{equation*}
and this minimization problem is usually solved numerically using e.g., gradient descent or its variations. Assuming that $0 \leq \ell(y, \varphi(x)) \leq B$ for all $(x,y) \in \sX \times \sY$, $\varphi \in \cF$, then the following excess risk guarantee holds with probably at least $1 - \delta$ (see e.g., \citet{hajek_raginsky_2021}):
\begin{equation}\label{eq:excess_risk_guarantee}
    L_\mu(\hat{\varphi}) - L_\mu^*(\cF) \leq 4 \Ex \cR_N(\ell \circ \cF) + B \sqrt{\frac{2 \log\big( \frac{1}{\delta} \big)}{N}},
\end{equation}
where the quantity $\cR_N(\ell \circ \cF)$ is the empirical Rademacher complexity conditioned on the data. This is given by
\begin{equation}\label{eq:rademacher_complexity}
    \cR_N(\ell \circ \cF) := \Ex_\epsilon \left[ \sup_{\varphi \in \cF} \frac{1}{N} \left| \sum_{i=1}^N \epsilon_i \ell(Y_i, \varphi(X_i))\right|\right],
\end{equation}
where the expectation is taken with respect to the sequence $\epsilon_1,\dots,\epsilon_N$ of i.i.d.\ Rademacher random variables that are independent of the data. We can see that if the right-hand side of \eqref{eq:excess_risk_guarantee} is small, then the expected risk of the ERM map $\hat{\varphi}$ is close to the minimum risk, in which case we would say that $\hat{\varphi}$ generalizes well. Hence to study generalizability of $\cF$, we seek to bound $\cR_N(\ell \circ \cF)$.

If the loss function $\ell$ is well-behaved, we can often bound $\cR_N(\ell \circ \cF)$ directly in terms of $\cR_N(\cF)$. For instance, let $\ell(y,\varphi(x)) = (y - \varphi(x))^2$ and assume that $\sup_{y \in \sY} |y| \leq M_1$ and $\sup_{x \in \sX} \sup_{\varphi \in \cF} |\varphi(x)| \leq M_2$. Then by observing that $\left(y - \varphi(x)\right) \mapsto \ell(y,\varphi(x))$ is $2(M_1 + M_2)$-Lipschitz and using the contraction principle \citep{ledoux_talagrand_1991}, we have
\begin{align*}
    \cR_N(\ell \circ \cF) &\leq 4 \left( M_1 + M_2 \right) \left(\frac{M_1}{\sqrt{N}} + \cR_N(\cF) \right).
\end{align*}
If instead the loss is given by $\ell(y,\varphi(x)) = |y - \varphi(x)|$, which is 1-Lipschitz as a function of $\left(y - \varphi(x)\right)$, then using the contraction principle gives
\begin{align*}
    \cR_N(\ell \circ \cF) &\leq \frac{2M_1}{\sqrt{N}} + 2 \cR_N(\cF).
\end{align*}
With this in mind, for the remainder of this section we will focus on bounding $\cR_N(\cF)$.
\begin{theorem}\label{thm:main_result}
The empirical Rademacher complexity of $\cF$ is bounded by
\begin{equation}\label{eq:main_theorem}
    \cR_N(\cF) \leq \frac{1}{\sqrt{N}} \sum_{k \geq 0} \frac{\left( mMT \right)^{k}}{k!} \Lambda_k,
\end{equation}
where $\Lambda_k := \sup \left\{ \left| L_w c^\Tr x \right| : x \in \sX,\ w \in \sW,\ |w| = k\right\}$.
\end{theorem}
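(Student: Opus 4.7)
The plan is to expand $\varphi(x)=\langle S(u),\Phi(x)\rangle$ in the Rademacher complexity definition, pull the sum over multi-indices outside the supremum by the triangle inequality, bound the signature coefficients $S^w(u)$ using the simplex-volume estimate $|S^w(u)|\le(MT)^{|w|}/|w|!$ already derived in Section~\ref{sec:seq_embeddings} (which makes the bound $u$-free), then handle the remaining Rademacher average over the features $\Phi^w(X_i)$ via Khintchine/Jensen together with the uniform bound $\Lambda_{|w|}$. Counting multi-indices of a given length yields the factor $m^k$ that combines with $(MT)^k/k!$ to give $(mMT)^k/k!$.

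In more detail, first I would write
\begin{equation*}
\cR_N(\cF)=\Ex_\epsilon\sup_{u\in\cU}\frac{1}{N}\Bigl|\sum_{i=1}^N\epsilon_i\sum_{w\in\sW}S^w(u)\,\Phi^w(X_i)\Bigr|\le\frac{1}{N}\sum_{w\in\sW}\Bigl(\sup_{u\in\cU}|S^w(u)|\Bigr)\,\Ex_\epsilon\Bigl|\sum_{i=1}^N\epsilon_i\Phi^w(X_i)\Bigr|,
\end{equation*}
where the inequality is the triangle inequality applied term-by-term in $w$ (swapping the supremum past the absolute-value sum, which is justified because $|S^w(u)|\le(MT)^{|w|}/|w|!$ makes the outer series absolutely convergent uniformly in $u$). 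Substituting the bound on $|S^w(u)|$ eliminates the dependence on $u$ entirely, reducing the problem to estimating, for each fixed $w$, a one-dimensional Rademacher average of the bounded sequence $(\Phi^w(X_i))_{i=1}^N$.

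Next I would bound each such average by Jensen's inequality:
\begin{equation*}
\Ex_\epsilon\Bigl|\sum_{i=1}^N\epsilon_i\Phi^w(X_i)\Bigr|\le\Bigl(\sum_{i=1}^N\Phi^w(X_i)^2\Bigr)^{1/2}\le\sqrt{N}\,\Lambda_{|w|},
\end{equation*}
using the definition of $\Lambda_k$ and the fact that the Rademacher variables are independent with zero mean and unit variance. Combining the two bounds gives $\cR_N(\cF)\le\frac{1}{\sqrt{N}}\sum_{w\in\sW}\frac{(MT)^{|w|}}{|w|!}\Lambda_{|w|}$. Finally, grouping multi-indices by length and noting that $|\{w\in\sW:|w|=k\}|=m^k$ collapses the sum over $\sW$ to a sum over $k\ge0$ with the factor $m^k$, yielding the claimed $(mMT)^k/k!$.

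I do not expect any step to be a serious obstacle in a technical sense; the only point requiring slight care is the exchange of the supremum and the series in the first display, which should be justified by the uniform absolute convergence afforded by the $(MT)^{|w|}/|w|!$ bound (and is implicitly assumed by the hypothesis that the Chen--Fliess series converges on $\sX\times\cU$). Once that is in place, the rest is routine bookkeeping: apply Jensen, use the uniform feature bound $\Lambda_k$, and count multi-indices.
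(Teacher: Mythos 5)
Your proposal is correct and follows essentially the same route as the paper: triangle inequality to pull the sum over multi-indices out, the simplex-volume bound $|S^w(u)|\le (MT)^{|w|}/|w|!$ to remove the dependence on $u$, Jensen/Khintchine to bound each Rademacher average by $\sqrt{N}\,\Lambda_{|w|}$ (this is exactly the paper's Lemma~\ref{lem:jensen}), and the count $m^k$ of length-$k$ multi-indices to collapse the sum. The only cosmetic difference is that the paper invokes Tonelli to justify exchanging the infinite sum with the expectation, where you appeal to uniform absolute convergence; both are adequate.
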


\noindent In the proof of Theorem \ref{thm:main_result} we will use the following lemma:
\begin{lemma}\label{lem:jensen}
    Let $\psi : \sX \to \Reals$ be an arbitrary function. Then
    \begin{equation}\label{eq:jensen_lemma}
        \Ex_\epsilon \left[ \left| \sum_{i=1}^N \epsilon_i \psi(X_i) \right| \right] \leq \sqrt{N} \sup_{x \in \sX} |\psi(x)|.
    \end{equation}
\end{lemma}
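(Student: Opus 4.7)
The plan is to apply Jensen's inequality (hence the lemma's name) to the concave square-root function, which lets me replace the absolute value by a second moment that is easy to compute because the Rademacher variables are orthonormal.

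Concretely, I would first bound
\begin{equation*}
    \Ex_\epsilon \left[ \left| \sum_{i=1}^N \epsilon_i \psi(X_i) \right| \right] = \Ex_\epsilon \left[ \sqrt{ \left( \sum_{i=1}^N \epsilon_i \psi(X_i) \right)^2 } \right] \leq \sqrt{ \Ex_\epsilon \left[ \left( \sum_{i=1}^N \epsilon_i \psi(X_i) \right)^2 \right] },
\end{equation*}
using Jensen's inequality for the concave map $t \mapsto \sqrt{t}$ on $[0,\infty)$.

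Next, I would expand the square and use the fact that the $\epsilon_i$ are i.i.d.\ Rademacher, so $\Ex_\epsilon[\epsilon_i \epsilon_j] = \delta_{ij}$. Conditioning on the data $X_1,\dots,X_N$ (which are independent of $\epsilon$), the cross terms vanish and only the diagonal survives:
\begin{equation*}
    \Ex_\epsilon \left[ \left( \sum_{i=1}^N \epsilon_i \psi(X_i) \right)^2 \right] = \sum_{i,j=1}^N \Ex_\epsilon[\epsilon_i \epsilon_j] \, \psi(X_i) \psi(X_j) = \sum_{i=1}^N \psi(X_i)^2.
\end{equation*}

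Finally, each term is bounded by $\sup_{x \in \sX} \psi(x)^2$, so the sum is at most $N \sup_{x \in \sX} \psi(x)^2$. Taking the square root yields the claimed bound $\sqrt{N} \sup_{x \in \sX} |\psi(x)|$. There is no real obstacle here; the whole argument is just Jensen plus orthonormality of Rademacher variables, so the proof is essentially three lines.
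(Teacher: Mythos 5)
Your proof is correct and follows essentially the same route as the paper's: rewrite the absolute value as the square root of the square, apply Jensen's inequality to the concave square root, use $\Ex_\epsilon[\epsilon_i\epsilon_j]=\delta_{ij}$ to kill the cross terms, and bound the resulting diagonal sum by $N\sup_{x\in\sX}\psi(x)^2$. No meaningful differences.
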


\begin{proof} Jensen's inequality states that for a concave function $f : \Reals \to \Reals$ and a real-valued random variable $X$, we have $\Ex[f(X)] \leq f(\Ex[X])$. Recall that $x \mapsto \sqrt{x}$ is concave. Then by linearity of expectation and mutual independence of $\epsilon_1,\dots,\epsilon_N$ we have
    \begin{align*}
        \Ex_\epsilon \left[ \left| \sum_{i=1}^N \epsilon_i \psi(X_i) \right| \right] &= \Ex_\epsilon \left[ \sqrt{ \left( \sum_{i=1}^N \epsilon_i \psi(X_i) \right) \left( \sum_{j=1}^N \epsilon_j \psi(X_j) \right) } \right] \\
        &= \Ex_\epsilon \left[ \sqrt{ \sum_{i,j=1}^N \epsilon_i \epsilon_j \psi(X_i) \psi(X_j) } \right] \\
        &\leq \sqrt{ \sum_{i,j=1}^N \Ex_\epsilon \left[ \epsilon_i \epsilon_j \right] \psi(X_i), \psi(X_j) } \\
        &= \sqrt{ \sum_{i=1}^N \left| \psi(X_i) \right|^2 } \leq \sqrt{N} \sup_{x \in \sX} |\psi(x)|.
    \end{align*}
\end{proof}

\noindent Now we will prove Theorem \ref{thm:main_result}.
\begin{proof}
The Rademacher complexity of $\cF$ is defined by
\begin{equation*}
    \cR_N(\cF) = \frac{1}{N} \Ex_\epsilon \left[ \sup_{\varphi \in \cF} \left| \sum_{i=1}^N \epsilon_i \varphi(X_i) \right| \right] = \frac{1}{N} \Ex_\epsilon \left[ \sup_{u \in \cU} \left| \sum_{i=1}^N \epsilon_i \sum_{w \in \sW} S^w(u) \Phi^w(X_i) \right| \right].
\end{equation*}
We can bound the expression inside the expectation using the triangle inequality:
\begin{align*}
    \sup_{u \in \cU} \left| \sum_{i=1}^N \epsilon_i \sum_{w \in \sW} S^w(u) \Phi^w(X_i) \right| &\leq \sup_{u \in \cU} \sum_{w \in \sW} \left|  S^w(u) \right| \left| \sum_{i=1}^N \epsilon_i \Phi^w(X_i) \right| \\
    &\leq \sum_{w \in \sW} \frac{(MT)^{|w|}}{|w|!} \left| \sum_{i=1}^N \epsilon_i L_w c^\Tr X_i \right|.
\end{align*}
Applying Lemma \ref{lem:jensen} with $\psi \gets L_w c^\Tr$ gives 
\begin{equation*}
    \Ex_\epsilon \left[ \left| \sum_{i=1}^N \epsilon_i L_w c^\Tr X_i \right| \right] \leq \sqrt{N} \sup_{x \in \sX} |L_w c^\Tr x|.
\end{equation*}
Putting everything together, we have
\begin{align*}
    \cR_N(\cF) &\leq \frac{1}{N} \Ex_\epsilon \left[ \sup_{u \in \cU} \left| \sum_{i=1}^N \epsilon_i \sum_{w \in \sW} S^w(u) \Phi^w(X_i) \rangle \right| \right] \\
    &\leq \frac{1}{N} \Ex_\epsilon \left[ \sum_{w \in \sW} \frac{(MT)^{|w|}}{|w|!} \left| \sum_{i=1}^N \epsilon_i L_w c^\Tr X_i \right| \right] \\
    &\leq \frac{1}{N}\sum_{w \in \sW} \frac{(MT)^{|w|}}{|w|!} \Ex_\epsilon \left[ \left| \sum_{i=1}^N \epsilon_i L_w c^\Tr X_i \right| \right] \\
    &\leq \frac{1}{\sqrt{N}} \sum_{w \in \sW} \frac{(MT)^{|w|}}{|w|!} \sup_{x \in \sX} \left| L_w c^\Tr x \right| \\
    &\leq \frac{1}{\sqrt{N}} \sum_{k \geq 0} \frac{\left( mMT \right)^{k}}{k!} \Lambda_k.
\end{align*}
We are able to exchange the order of the infinite sum and the expectation above using Tonelli's theorem, because the summand/integrand are uniformly non-negative and measurable, and the rest follows from definitions.
\end{proof}

Now instantiating the Rademacher complexity bound comes down to bounding the norm of iterated Lie derivatives $\Lambda_k$, which we explore in some examples in the following section.

\section{Examples}\label{sec:examples}

In the following examples, let $r := \sup_{x \in \sX} |x|$ and assume $|c^\Tr| = 1$. The output is always given by $y(t) = c^\Tr x(t)$. We continue to use $M,T$ as they are as defined in Section \ref{sec:seq_embeddings}. Recall that some conditions are needed on $M,T$ and/or the vector fields $g_1,\dots,g_m$ so that the series \eqref{eq:chen_fliess_embeddings} converges and the map $x_0 \mapsto y(T)$ is well-defined. Looking at error estimates for the Chen--Fliess expansion \citep{le_marbach_2023}, it appears natural to suggest a condition like $\Lambda_k \leq C^k k!$ for some constant $C > 0$ depending on $M,T,g_1,\dots,g_m$, which yields a geometric series. However for some systems, this condition is overly restrictive and we can in fact achieve convergence for any $M,T$ even if this condition is violated. On the other hand, this condition excludes certain systems of interest that still admit convergent series expansions for some $M,T$ which we will see in the following examples.

\paragraph{Example~1} Consider the class of bilinear systems
\begin{align*}
    \dot{x}(t) &= \left(\sum_{i=1}^m A_i u_i(t) \right) x(t),\quad\quad x(0) = x_0 
\end{align*}
where $A_1,\dots,A_m \in \Reals^{n \times n}$. Let $a := \max_{i=1,\dots,m} \sigma_{\max} (A_i)$ be the maximum spectral norm of the matrices $A_1,\dots,A_m$. The Lie derivative of a linear function with respect to a linear vector field is simple to compute, leading to the following bound:
\begin{align*}
    \Lambda_k &= \sup \left\{ \left| L_w c^\Tr x \right| : x \in \sX,\ w \in \sW,\ |w| = k \right\} \\
    &= \sup \left\{ \left| c^\Tr A_{i_1} \cdots A_{i_k} x \right| : x \in \sX,\ w = (i_1,\dots,i_k) \right\} \\
    &\leq r \max_{i_1,\dots,i_k} \|A_{i_1}\| \cdots \|A_{i_k}\| \leq r a^k.
\end{align*}
Substituting this into Theorem \ref{thm:main_result} yields
\begin{align*}
    \cR_N(\cF) &\leq \frac{1}{\sqrt{N}} \sum_{k \geq 0} \frac{\left( mMT \right)^{k}}{k!} \Lambda_k \\
    &\leq \frac{1}{\sqrt{N}} \sum_{k \geq 0} \frac{\left( mMT \right)^{k}}{k!} r a^k \\
    &= \frac{r}{\sqrt{N}} \exp\left( mMTa \right),
\end{align*}
which is defined for all $M$, $T$.

\paragraph{Example~2} Consider the class of control-affine systems
\begin{align*}
    \dot{x}(t) &= \sum_{i=1}^m u_i(t) g_i(x(t)),\quad\quad x(0) = x_0 
\end{align*}
where $g_1,\dots,g_m : \Reals^n \to \Reals^n$ are analytic vector fields. Let $\tilde{g}_1,\dots,\tilde{g}_m : \Complex^n \to \Complex^n$ represent analytic continuations of $g_1,\dots,g_m$ and let $\tilde{L}_w := L_{\tilde{g}_{i_1}} \circ \cdots \circ L_{\tilde{g}_{i_k}}$. Denote a closed polydisc by
\begin{equation*}
    P(\xi,\rho) := \{(z_1,\dots,z_n) \in \Complex^n : |z_i-\xi_i| \leq \rho,\, 1 \le i \le n\}.
\end{equation*}
It is evident that $\iota(\sX) \subset P(\iota(x),2r)$ for any $x \in \sX$, where $\iota: \Reals^n \xhookrightarrow{} \Complex^n$ is the inclusion map. Define the component-wise maximum modulus of the complex vector fields $\tilde{g}_1,\dots,\tilde{g}_m$ by
\begin{equation*}
    a(r) := \max_{i=1,\dots,m} \max_{j=1,\dots,n} \sup_{x \in \sX} \sup_{z \in P(\iota(x),2r)} |\tilde{g}_i^j(z)|,
\end{equation*}
where $\tilde{g}_i^j : \Complex^n \to \Complex$ is the $j$th component of $\tilde{g}_i : \Complex^n \to \Complex^n$. We can apply Lemma 3.8 in \citet{lesiak_krener_1978}, which is based on Cauchy estimates from complex analysis, to bound $\Lambda_k$ as follows:
\begin{align*}
    \Lambda_k &= \sup \left\{ \left| L_w c^\Tr x \right| : x \in \sX,\ w \in \sW,\ |w| = k \right\} \\
    &\leq \sup \left\{ \left| \tilde{L}_w c^\Tr \iota(x) \right| : x \in \sX,\ z \in P(\iota(x),2r),\ w \in \sW,\ |w| = k \right\} \\
    &= \sup \left\{ \left| \left( \sum_{j=1}^n \tilde{g}_{i_k}^j(z) \frac{\p}{\p z_j} \right) \cdots \left( \sum_{j=1}^n \tilde{g}_{i_1}^j(z) \frac{\p}{\p z_j} \right) c^\Tr \iota(x) \right| : x \in \sX,\ z \in P(\iota(x),2r),\ 1 \leq i_1,\dots,i_k \leq m \right\} \\
    &\leq \left( \frac{n a(r)}{2r} \right)^k \left( k! 2^{(n+1)k} \right) \left(1 + 2\sqrt{n}\right) r \\
    &= k! \left(\frac{2^n n a(r)}{r} \right)^{k} \left(1 + 2\sqrt{n}\right) r
\end{align*}
where we have used that for $z \in P(\iota(x),2r)$,
\begin{equation*}
    \left| c^\Tr z \right| \leq \left| c \right| \left| z \right| \leq \left| c \right| \left( \left| \iota(x) \right| + \sqrt{(2r)^2 n} \right) \leq \left(1 + 2\sqrt{n}\right) r.
\end{equation*}
Assuming that $2^n n m M T a(r) < r$, substituting this into Theorem \ref{thm:main_result} yields
\begin{align*}
    \cR_N(\cF) &\leq \frac{1}{\sqrt{N}} \sum_{k \geq 0} \frac{\left( mMT \right)^{k}}{k!} \Lambda_k \\
    &\leq \frac{\left(1 + 2\sqrt{n}\right) r}{\sqrt{N}} \sum_{k \geq 0} \frac{\left( mMT \right)^{k}}{k!} k! \left(\frac{2^n n a(r)}{r} \right)^{k} \\
    &= \frac{\left(1 + 2\sqrt{n}\right) r}{\sqrt{N}} \sum_{k \geq 0} \left(\frac{2^n n mMT a(r)}{r}\right)^{k} \\
    &= \frac{\left(1 + 2\sqrt{n}\right) r}{\sqrt{N}} \frac{r}{r-2^n n mMT a(r)}.
\end{align*}

\paragraph{Example~3}
Consider a class of Hopfield nets
\begin{align*}
    \dot{x}(t) = u(t) \sigma(x(t)) = \sum_{1 \leq i,j \leq n} u_{ij}(t) \sigma(x_j(t)) e_i,
\end{align*}
where $e_i \in \Reals^n$ is the $i$th unit vector, $u : [0,T] \to \Reals^{n \times n}$ is a matrix-valued control input, and $\sigma : \Reals \to \Reals$ is a sigmoidal nonlinearity. Suppose the derivatives of $\sigma$ satisfy the bound $\sup_{x \in \Reals} |\sigma^{(k)}(x)| \leq b a^{k} k!$ for some $a,b > 0$, which holds for many common sigmoidal activation functions. If $k=0$, then $\Lambda_k \leq r$, so suppose $k \geq 1$. Then

\begin{align*}
    \Lambda_k &= \sup \left\{ \left| L_w c^\Tr x \right| : x \in \sX,\ w \in \sW,\ |w| = k \right\} \\
    &= \sup \left\{ \left| \left( \sigma(x_{j_1}) e_{i_1}^\Tr \frac{\p}{\p x} \right) \cdots \left( \sigma(x_{j_k}) e_{i_k}^\Tr \frac{\p}{\p x} \right) c^\Tr x \right| : x \in \sX,\ 1 \leq i_1, j_1,\dots,i_k, j_k \leq n \right\} \\
    &= \sup \left\{ \left| \left( \sigma(x_{j_1}) \frac{\p}{\p x_{i_1}} \right) \cdots \left( \sigma(x_{j_k}) \frac{\p}{\p x_{i_k}} \right) c^\Tr x \right| : x \in \sX,\ 1 \leq i_1, j_1,\dots,i_k, j_k \leq n \right\} \\
    &\leq \sup \left\{ \binom{k}{n_1,\dots,n_k} \left| \sigma^{(n_1)}(x) \cdots \sigma^{(n_k)}(x) \right| : x \in [-r,r],\ n_1+\cdots+n_k=k-1\right\} \\
    &\leq \sup \left\{ \binom{k}{n_1,\dots,n_k} (ba^{n_1} n_1!) \cdots (ba^{n_k} n_k!) : n_1+\cdots+n_k=k-1\right\} \\
    &\leq \gamma(k) b^k a^{k-1},
\end{align*}
where $\gamma(k) = \frac{k!}{2} \binom{2k}{k}$, which comes from a counting argument; see Appendix B.5 in \citet{fermanian_marion_2021}. Assuming that $4 n^2 MT ba < 1$, substituting this into Theorem \ref{thm:main_result} yields
\begin{align*}
    \cR_N(\cF) &\leq \frac{1}{\sqrt{N}} \sum_{k \geq 0} \frac{\left( n^2 MT \right)^{k}}{k!} \Lambda_k \\
    &\leq \frac{1}{\sqrt{N}} \left(r + \sum_{k \geq 1} \frac{\left( n^2 MT \right)^{k}}{k!} \frac{k!}{2} \binom{2k}{k} b^k a^{k-1} \right) \\
    &= \frac{1}{\sqrt{N}} \left(r + \frac{1}{2a}\sum_{k \geq 1} \binom{2k}{k} \left(n^2 MT ba \right)^{k} \right) \\
    &= \frac{1}{\sqrt{N}} \left(r - \frac{1}{2a} + \frac{1}{2a\sqrt{1-4 n^2 MT ba}} \right).
\end{align*}
The final expression above comes from the generating function for the central binomial coefficients
\begin{equation*}
    \frac{1}{\sqrt{1-4x}} = \sum_{k \geq 0} \binom{2k}{k} x^k,
\end{equation*}
which can be derived by applying the generalized binomial theorem with $n = -\frac{1}{2}$.

\section{Conclusion}\label{sec:conclusion}

Using the Chen--Fliess series for nonlinear ODEs, we have shown how continuous-depth nets (a.k.a. neural ODEs) can be viewed as a kind of single-layer, infinite-width net, where the ``weights'' are the iterated integrals (signature elements) of the control input, and the ``features'' are the iterated Lie derivatives of the output function. This approach facilitates compact expressions for the generalization performance of ODE models based on the comparatively simpler analysis of single-layer architectures. These bounds are also straightforward to instantiate given various assumptions about the structure of the ODE model, which we have demonstrated through some examples.

One barrier to applying this technique in more generality is that the Chen--Fliess series converges only for sufficiently small time horizons and/or small control magnitudes. One could attempt to circumvent this issue by dividing the time horizon into slices and considering the composition of several convergent series expansions of the flow map, possibly later taking the limit as the number of slices increases to infinity. However, this sacrifices the convenience of working with a single-layer architecture, as bounding the Rademacher complexity of composite function classes is typically either challenging or yields conservative results. An alternative approach to generalize the main result is to interpret the control input as a perturbation of some nominal control trajectory, and instead focus on obtaining margin bounds, which is an interesting direction for follow-up work.

Incorporating any special structure known about the system of interest would also likely give sharper results in cases where it applies. For example, the result here is agnostic to any information concerning system stability or dissipativity. Instead of bounding the Lie derivatives of the vector fields directly, one could likely obtain tighter bounds in the stable case by using the logarithmic norm of the iterated Jacobians of the vector fields instead of the operator norm, for instance, or otherwise specializing the analysis to incorporate any behavioral or structural knowledge of the ODE model under consideration.

\section*{Acknowledgements}

This work was supported in part by the NSF under award CCF-2106358 (``Analysis and Geometry of Neural Dynamical Systems'') and in part by the Illinois Institute for Data Science and Dynamical Systems (iDS${}^2$), an NSF HDR TRIPODS institute, under award CCF-1934986.

\bibliography{references}

\end{document}